\documentclass[letterpaper,10pt,conference]{ieeeconf}
\usepackage{amsmath,amsfonts,amssymb}
\usepackage{algorithmic}
\usepackage{algorithm}
\usepackage{array}
\usepackage{tabularx}
\usepackage{booktabs}
\usepackage{threeparttable}
\newcolumntype{Y}{>{\centering\arraybackslash}X}
\newcolumntype{L}{>{\raggedright\arraybackslash}X}
\usepackage{textcomp}
\usepackage{stfloats}
\usepackage{url}
\usepackage{verbatim}
\usepackage{graphicx}
\usepackage{subcaption}
\usepackage{cite}

\usepackage{bm}
\usepackage{pifont}
\newtheorem{lemma}{Lemma}

\usepackage{xcolor}

\graphicspath{{figs/}}
\newcommand{\mbdtrue}{\textsc{MBD-true}}
\newcommand{\dkmbd}{\textsc{DK-MBD}}
\newcommand{\dksplit}{\textsc{DK-MBD-split}}
\newcommand{\bkmbd}{\textsc{BK-MBD}}
\newcommand{\dklarge}{\textsc{DK-MBD-large}}
\newcommand{\mlpmbd}{\textsc{MLP-MBD}}

\usepackage{hyperref}
\usepackage{subcaption}
\DeclareCaptionFont{timesnr}{\fontfamily{ptm}\selectfont}
\IEEEoverridecommandlockouts                              

\title{\LARGE Koopman-Accelerated Model-Based Diffusion for Real-Time Robot Control}

\author{Bohyeong Pak$^{1}$, Kangmin Lee$^{1}$, and Sanghyun Kim$^{1,*}$
\thanks{This work was supported by the Korea Planning \& Evaluation Institute of Industrial Technology (KEIT) and the Ministry of Trade, Industry \& Energy (MOTIE) (RS-2025-11082970, RS-2025-25449039), and by the Korea Basic Science Institute (KBSI) through a grant funded by the Ministry of Science and ICT (MSIT) (RS-2025-00564593).}
\thanks{$^{1}$Department of Mechanical Engineering, Kyung Hee University, Yongin, Republic of Korea.}%
\thanks{$^{*}$Corresponding author: Sanghyun Kim. {\tt\small kim87@khu.ac.kr}}
}

\begin{document}

\maketitle
\thispagestyle{empty}
\pagestyle{empty}

\begin{abstract}
Conventional model-based diffusion (MBD) achieves effective trajectory optimization by leveraging noise annealing. However, its high computational cost, primarily arising from repeated rollouts of the plant dynamics, has largely confined its use to offline settings. To address this limitation, this paper proposes bilinear Koopman model-based diffusion (\bkmbd{}). The proposed method lifts the robot's state into a high-dimensional space only once per control step and propagates all candidates in the lifted space thereafter, so each rollout reduces to a fixed number of matrix--vector multiplications. The lifted dynamics are bilinear, allowing the predicted input gain to vary with the robot's configuration, which a linear lifted model cannot represent. In simulation, \bkmbd{} completed each planning update in at most $14.7$ ms within a $50$ ms control period and reached the goal on every trial, whereas a linear lift almost never did. The annealed schedule improves closed-loop accuracy over fixed-noise schedules under the learned rollout. Under the exact rollout, both the annealed and fixed-narrow schedules reach every goal, indicating that annealing reduces sensitivity to surrogate-model error. \bkmbd{} also threaded a passage that no single convex region covers, whereas a convexified bilinear controller rarely succeeded. On a physical manipulator, \bkmbd{} tracked an initially unknown moving target within the control period and was the only method that met both the tracking task and the deadline. The project page is available at \url{https://rcilab.khu.ac.kr/bkmbd/}.
\end{abstract}


\section{Introduction}\label{sec:intro}

Sampling-based predictive control is widely used for robotic systems with nonlinear dynamics.
Model predictive path integral (MPPI)~\cite{williams2016aggressive, williams2017mppi} and model-based diffusion (MBD)~\cite{pan2024model} evaluate candidate control sequences via forward rollout and update the nominal sequence using a cost-weighted average.
MBD treats each such update as a step of reverse diffusion.
The weighted average estimates the score of a Gaussian-smoothed target~\cite{song2019generative, song2020score}, while a decreasing noise schedule guides the search from coarse exploration to fine refinement.
This approach requires neither a differentiable nor a convex cost function and can accommodate nonconvex feasible sets.

However, MBD remains limited to offline trajectory optimization, leaving its application to receding-horizon control as an open challenge~\cite{pan2024model}.
The primary bottleneck is the computational cost of the rollout.
A single control step evaluates candidate rollouts at every annealing stage, and receding-horizon control draws a new population after each executed action, so a simulation-based rollout is evaluated far too often to be affordable.
Xue et al.~\cite{xue2025full} mitigated this cost with a vectorized physics engine on a GPU.
On a CPU, meeting the control rate still calls for a rollout model far lighter than a full simulation.

A Koopman lift enables such low-cost computation by acting as a surrogate for the plant.
Lifting the state into a high-dimensional space through hand-designed~\cite{williams2015edmd, proctor2016dmdc} or learned~\cite{lusch2018deep} observables turns the nonlinear rollout into simple matrix--vector multiplications.
Hao et al.~\cite{mppidk2026} applied this by integrating a linear Koopman model into MPPI.
Placing a finite-dimensional learned model inside a sampling planner, however, raises two challenges.
The rollout must preserve the configuration-dependent effect of the input, and the optimizer must be robust to local minima that residual model error may introduce into the surrogate cost landscape.

The first challenge originates in how the input is represented.
A linear lift processes the input through a constant matrix, which amounts to assuming an input gain that does not depend on the configuration of the robot.
On a velocity-commanded robot, the input gain instead varies continuously with the configuration.
This mismatch is particularly detrimental to sampling-based planning, where candidates are selected according to their predicted cost differences.
Holding the gain constant misrepresents how different control sequences affect the robot across configurations and can therefore corrupt the ranking of candidates. Increasing the number of candidates does not remove this error, since it is structural rather than statistical.
A bilinear Koopman model allows the predicted input gain to depend on the configuration through state--input product terms~\cite{iacob2024}. Linear lifted models have nonetheless remained attractive because they allow the control problem to be cast as a quadratic program (QP)~\cite{korda2018mpc}, and Bruder et al.~\cite{bruder2021bilinear} retained bilinear dynamics but froze the state-dependent terms along a nominal trajectory to preserve convexity, yielding a local approximation of the full state--input coupling. Thus, the practical obstacle lies not in the bilinear representation itself, but in optimizing it using conventional convex methods.

The second challenge originates in model error.
A learned rollout model is approximate, and its residual error may introduce local minima into the surrogate cost landscape.
An optimizer that settles into one of these minima selects a sequence that looks favorable under the surrogate yet performs poorly on the real system.
Raising the expressiveness of the rollout model is thus not sufficient on its own, and the optimizer has to be robust to such error as well.

This paper accordingly proposes bilinear Koopman model-based diffusion (\bkmbd{}), which combines a bilinear Koopman rollout with the annealed sampling of MBD.
The bilinear structure carries the configuration-dependent input gain, and the sampling planner evaluates only the total cost of a sequence without requiring convexity, so it uses that state dependence as it stands rather than freezing or convexifying it.
The annealed schedule smooths the cost-induced target broadly at high noise levels and refines the plan under a less-smoothed target as the noise decreases, which can reduce sensitivity to these surrogate-induced local minima. To test whether this benefit originates in surrogate error, fixed-noise and annealed schedules are compared under both learned and exact rollouts.
Computationally, the state is lifted once per control step and every candidate is then propagated in the lifted space, so each rollout reduces to a fixed number of matrix--vector multiplications.

The diffusion in \bkmbd{} is a property of the optimizer rather than a learned motion prior.
Existing diffusion-based motion planners learn task-conditioned trajectory distributions from task-specific data~\cite{janner2022planning, chi2025diffusion, carvalho2023motion}, whereas \bkmbd{} learns only the rollout model; goals and constraints enter at deployment through the cost function.

The main contributions are as follows.

\begin{itemize}
\item \textbf{MBD at the control rate.}
\bkmbd{} employs a bilinear Koopman rollout that represents the configuration-dependent input gain without freezing or convexification. Lifting the state once per control step reduces all subsequent candidate propagation to a fixed sequence of matrix--vector multiplications, allowing the full annealing process to run within the control period.

\item \textbf{Annealing under a learned rollout.}
Under the learned surrogate, annealing attains higher reaching accuracy than every fixed-noise schedule. Under the exact rollout, both the annealed and fixed-narrow schedules reach every goal, indicating that annealing reduces sensitivity to surrogate-model error.

\item \textbf{Real-time validation on a physical robot.}
On a physical FR3 tracking a moving target whose trajectory is unavailable in advance, \bkmbd{} completes all ten trials and returns every command within the 50~ms control period, whereas every baseline fails the task, the deadline, or both.
\end{itemize}

\section{Preliminaries}\label{sec:prelim}

In this section we formulate the control problem, review the MBD update, and state the two classes of lifted rollout that the study compares.

\subsection{Problem Formulation}\label{subsec:problem}

Consider the discrete-time system
\begin{equation}
s_{t+1} = f(s_t, u_t), \quad
s_t \in \mathcal{X} \subset \mathbb{R}^{n},\;
u_t \in \mathcal{U} = [\underline{u}, \bar{u}]^{m},
\label{eq:sys}
\end{equation}
obtained from a control period $\Delta t$.
Here $f$ is unknown or too costly to integrate at the sampling rate, so a true-dynamics rollout serves as an oracle reference alone, and the rollout model is learned from interaction data $\mathcal{D} = \{(s_i, u_i, s_i^{+})\}$.
Throughout, $\mathcal{X}$ is the operating set, the region to which the task confines the plant, and every statement below is made over it.
The task defines features $b(s) \in \mathbb{R}^{d}$, which the rollout model predicts and the cost reads.
The controller tracks an output $y = S_y\, b(s) \in \mathbb{R}^{d_y}$ selected from those features by a constant $S_y$ of full row rank.

The plants considered here are control-affine and velocity-commanded, $\dot{s} = f_c(s) + G(s)u$ with output $y = h(s)$.
Integrating over one control period under a zero-order hold gives the response of the tracked output,
\begin{equation}
y_{t+1} - y_t = \big(\omega(s_t) + \Phi(s_t)\, u_t\big)\Delta t + O(\Delta t^{2}),
\label{eq:resp}
\end{equation}
with drift response $\omega = \nabla h\, f_c \in \mathbb{R}^{d_y}$ and \emph{input gain} $\Phi = \nabla h\, G \in \mathbb{R}^{d_y \times m}$.
The drift response is a function of the state alone and a model of the state may absorb it, so $\Phi(s)u$ is the only term in which the state and the input meet.
The input gain is in general state-dependent.
How much it varies over $\mathcal{X}$ is a property of the plant and the task rather than of the controller applied to them.

Everything the controller optimizes is produced by a \emph{rollout model} $\mathcal{R}: (b_t, U) \mapsto (\hat{b}_1, \dots, \hat{b}_T)$ acting on a candidate sequence $U = (u_0, \dots, u_{T-1})$.
The cost depends on the candidate only through $\mathcal{R}$, so two controllers that share an optimizer differ exactly in $\mathcal{R}$.
At state $s_t$ the controller solves
\begin{equation}
\begin{aligned}
\min_{U} \;\; & J(U) = \sum_{k=1}^{T} c(\hat{b}_k, u_{k-1}) + \ell_T(\hat{b}_T) \\
\text{s.t.} \;\; & U \in \mathcal{U}^{T}, \quad \hat{b}_k \in \mathcal{F}, \quad k = 1, \dots, T ,
\end{aligned}
\label{eq:ocp}
\end{equation}
where $\mathcal{F}$ is the free set the task admits.
It then applies $u_0$ to \eqref{eq:sys} and replans at the next period.

Neither $c$ nor $\ell_T$ is assumed differentiable or convex in $U$, and $\mathcal{F}$ is assumed neither convex nor connected.
Since no structure is imposed on $\mathcal{F}$, an optimizer may enforce it as a constraint or absorb it into the cost as a penalty.
The controller is real-time in the sense that $u_0$ must be returned within $\Delta t$, so the admissibility of a rollout model depends on what the optimizer asks of it in one control step.

\subsection{The Model-Based Diffusion Update}\label{subsec:mbd}

MBD~\cite{pan2024model} treats \eqref{eq:ocp} as sampling from the Boltzmann target $p_0(U) \propto \exp(-J(U)/\alpha)$ with temperature $\alpha > 0$.
It descends a ladder of Gaussian-smoothed marginals $p_{\sigma} = p_0 * \varphi_{\sigma}$ over a decreasing schedule $\sigma_1 > \cdots > \sigma_S$.\footnote{We write the ladder in variance-exploding form, in which the smoothing kernel alone carries the schedule. The formulation of~\cite{pan2024model} additionally rescales the iterate, which affects no statement below.}
Applying Tweedie's formula to a smoothed marginal expresses its score as a denoising mean shift,
\begin{equation}
\nabla_U \log p_{\sigma}(U) = \big(\mathbb{E}[\,U_0 \mid U\,] - U\big)\big/\sigma^{2} .
\label{eq:tweedie}
\end{equation}
Since $\varphi_\sigma$ is the density of the proposal $U' \sim \mathcal{N}(U, \sigma^2 I)$, the posterior mean in \eqref{eq:tweedie} is a ratio of expectations under that proposal, and drawing $N$ candidates $U_k = U + \sigma \varepsilon_k$ with $\varepsilon_k \sim \mathcal{N}(0,I)$ and projecting them onto $\mathcal{U}^{T}$ gives its self-normalized estimator,
\begin{equation}
\widehat{\mathbb{E}}[U_0 \mid U] = \sum_{k=1}^{N} w_k U_k,
\quad
w_k = \frac{e^{-(J(U_k) - J_{\min})/\alpha}}{\sum_{l} e^{-(J(U_l) - J_{\min})/\alpha}} .
\label{eq:weights}
\end{equation}
The update at stage $j$ is a preconditioned score step with optional Langevin noise,
\begin{equation}
U \leftarrow U + \eta_j\, \widehat{\nabla_U \log p_{\sigma_j}}(U) + \sqrt{2 \eta_j}\, \xi,
\qquad \eta_j = \eta\, \sigma_j^{2},
\label{eq:update}
\end{equation}
which at $\eta = 1$ and without Langevin noise, the setting used throughout, is the cost-weighted mean $U \leftarrow \sum_k w_k U_k$.

Three properties of \eqref{eq:update} are used later, and none of them depends on the rollout model that supplies $J$.
First, the weights \eqref{eq:weights} are invariant to any term shared by the population.
Only differences of cost within a population enter the update, and the candidates of a population differ in $U$ alone.
Second, the $S$ stages are serial, since stage $j$ samples around the iterate produced by stage $j-1$.
The $N$ candidates within a stage and the $T$ steps of a rollout admit batching.
Third, a single stage at a fixed noise level, with $\eta = 1$ and no Langevin noise, is one cost-weighted average over a Gaussian population drawn around the warm-started iterate.
That average is the MPPI update~\cite{williams2017mppi}, so the annealed schedule is the property separating \eqref{eq:update} from the planner it generalizes, and a comparison at a matched total sample budget is available by construction.

The rollout model is evaluated once per candidate at every stage and over the full horizon, so a control step issues $N S T$ queries to $\mathcal{R}$.

\subsection{Linear and Bilinear Lifted Rollouts}\label{subsec:koopman}

The Koopman operator~\cite{koopman1931, mezic2005} advances observables of the state along the flow and is linear on the space of observables, but it is infinite-dimensional.
For a system driven by an input, an exact realization requires observables of the input as well as of the state~\cite{iacob2024}.
A finite-dimensional realization therefore truncates the representation of the state and, separately, fixes the form in which the input enters the lifted dynamics.
The two classes below agree in the first respect and differ in the second.

We use a learned lift with an exact-decoder convention, $z = \Psi_\theta(b) \in \mathbb{R}^{r}$, whose leading $d$ coordinates hold $b$ itself and whose remainder is learned.
Placing the features in the leading coordinates makes the decoder $C$ with $Cz = b$ a constant selector, and composing it with the output map gives the constant readout $C_y = S_y C \in \mathbb{R}^{d_y \times r}$.
Two classes of lifted dynamics are available,
\begin{align}
\text{linear:} \quad & z^{+} = A z + B u ,
\label{eq:lin}\\[2pt]
\text{bilinear:} \quad & z^{+} = A z + B_0 u + \textstyle\sum_{i=1}^{m} u_i B_i z ,
\label{eq:bil}
\end{align}
the latter written compactly as $z^{+} = M(u)z + B_0 u$ with $M(u) = A + \sum_i u_i B_i$.
For control-affine systems the continuous-time Koopman generator carries a bilinear input structure~\cite{iacob2024}, of which \eqref{eq:bil} is the finite-dimensional, discrete-time counterpart.
Realizations of this form have been identified from data and used for robot control~\cite{bruder2021bilinear}.

\begin{figure*}[t]
\centering
\includegraphics[width=\linewidth]{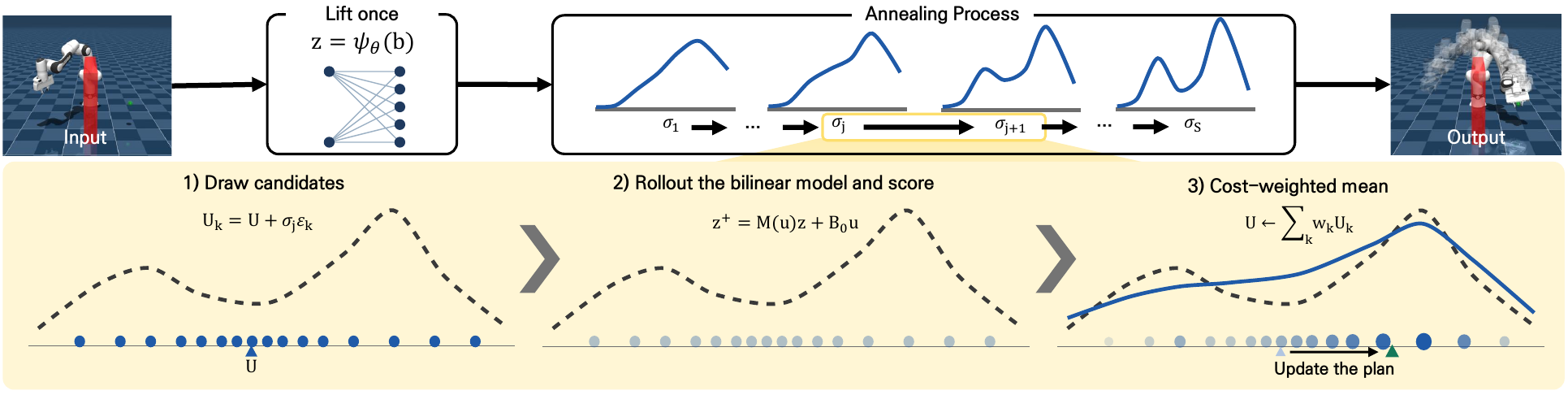}
\caption{One control step of \bkmbd{}. The state is lifted once, and every rollout of the annealed search is then a product of learned matrices.}
\label{fig:overview}
\end{figure*}

The two classes differ in exactly one place, the input tangent map.
For \eqref{eq:bil}, $\partial z^{+}/\partial u_i = B_0 e_i + B_i z$ depends on the state through $B_i z$, while for \eqref{eq:lin}, $\partial z^{+}/\partial u = B$ is one matrix for the whole of $\mathcal{X}$.
Both remain in the same computational regime, a bilinear step costing $m+1$ matrix--vector products of size $r$ against one for the linear form.
The cost of a rollout step is therefore set by the lifted dimension and the number of input channels alone.

\section{Bilinear Koopman Model-Based Diffusion}\label{sec:method}

In this section we propose \bkmbd{}, which keeps the annealed sampling optimizer of MBD and has every rollout performed by a lifted surrogate.
Sec.~\ref{subsec:bilinear} fixes the class of that surrogate and Sec.~\ref{subsec:training} identifies a model of that class from data.
The rollout is then a learned object rather than a simulator, and Sec.~\ref{subsec:anneal} treats the role this leaves to the noise schedule.
Fig.~\ref{fig:overview} shows one control step.

\subsection{The Bilinear Rollout}\label{subsec:bilinear}

Sec.~\ref{subsec:koopman} leaves the learned coordinates of the lift unspecified.
We take
\begin{equation}
z = \Psi_\theta(b) = \begin{bmatrix} b \\ \psi_\theta\big(\tau(b)\big) \end{bmatrix} \in \mathbb{R}^{r},
\label{eq:lift}
\end{equation}
where $\psi_\theta$ is a small neural network and $\tau$ is a fixed feature map that supplies terms the learned part would otherwise have to construct.
We write $\Psi_i$ for the $i$th component of $\Psi_\theta$.

What governs the optimizer is the response the rollout predicts to an applied input, since the weights \eqref{eq:weights} read only differences of cost within a population whose members differ in the control sequence alone.
Decoding one step of the linear class \eqref{eq:lin} gives $C_y z^{+} = C_y A z + C_y B u$, so that response has sensitivity $C_y B$, a single matrix over the whole of $\mathcal{X}$.
The lifted dimension $r$ and the encoder $\Psi_\theta$ change the value of $C_y B$ and leave it a single matrix, since $C_y$ is a constant selector and $B$ carries no dependence on the state.
The plant answers with $\Delta t\, \Phi(s)$ by \eqref{eq:resp}, which takes a different value at every configuration.

\begin{lemma}[Input-channel error of a linear lift]
\label{lem:linear}
Let $\nu_\Phi = \sup_{s, s' \in \mathcal{X}} \| \Phi(s) - \Phi(s') \|$ denote the variation of the input gain over the operating set.
For every constant matrix $M \in \mathbb{R}^{d_y \times m}$,
\begin{equation}
\sup_{s \in \mathcal{X}} \big\| \Delta t\, \Phi(s) - M \big\|
\;\ge\; \tfrac{1}{2}\, \nu_\Phi\, \Delta t .
\label{eq:bound}
\end{equation}
In particular the bound holds at $M = C_y B$ for every pair $(A, B)$ of \eqref{eq:lin}, and therefore at every lifted dimension and for every encoder.
\end{lemma}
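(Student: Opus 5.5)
The bound is a triangle-inequality (Chebyshev-radius) argument: a single matrix cannot lie close to every point of a set whose diameter is $\Delta t\,\nu_\Phi$. Fix an arbitrary constant $M \in \mathbb{R}^{d_y \times m}$ and write $E(M) = \sup_{s \in \mathcal{X}} \|\Delta t\,\Phi(s) - M\|$. If $E(M) = \infty$ there is nothing to prove, so assume it is finite.

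For any pair $s, s' \in \mathcal{X}$, insert $M$ and use the triangle inequality for the chosen norm:
\begin{equation*}
\Delta t\,\|\Phi(s) - \Phi(s')\| \le \|\Delta t\,\Phi(s) - M\| + \|M - \Delta t\,\Phi(s')\| \le 2\,E(M).
\end{equation*}
This uses $\Delta t > 0$ to pull the scalar out of the norm. The right-hand side does not depend on the pair, so taking the supremum over $s, s'$ on the left gives $\Delta t\,\nu_\Phi \le 2E(M)$, which is \eqref{eq:bound}. The same argument covers $\nu_\Phi = \infty$: then $E(M)$ is infinite for every $M$, and the inequality holds in the extended sense.

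For the specialization, the discussion preceding the lemma shows that $C_y B$ is a single constant matrix. $C_y = S_y C$ is a constant selector, and $B$ in \eqref{eq:lin} carries no dependence on the state. This holds for every $(A, B)$, every lifted dimension $r$ and every encoder $\Psi_\theta$, since these only change the value of $C_y B$. Substituting $M = C_y B$ into the general bound gives the final claim.

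There is no real obstacle here. The only care needed is that the norm is fixed throughout (any norm on $\mathbb{R}^{d_y \times m}$ works, since only the triangle inequality and homogeneity are used), and that the suprema are handled correctly when they may be infinite.
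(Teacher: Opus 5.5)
Your proof is correct and follows the paper's argument. The paper applies the same triangle inequality to an arbitrary pair $s, s'$, phrasing it as the larger of the two terms being at least $\tfrac{\Delta t}{2}\|\Phi(s)-\Phi(s')\|$ rather than bounding both by $E(M)$; it then takes the supremum over pairs and notes that $C_y B$ is one such constant matrix, and your remarks on infinite suprema and on the choice of norm are extra care the paper leaves implicit.
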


\begin{proof}
For any $s, s' \in \mathcal{X}$ the triangle inequality gives
\begin{equation*}
\big\| \Delta t\, \Phi(s) - M \big\| + \big\| \Delta t\, \Phi(s') - M \big\|
\;\ge\; \Delta t \big\| \Phi(s) - \Phi(s') \big\| ,
\end{equation*}
so the larger of the two terms is at least $\tfrac{\Delta t}{2}\|\Phi(s) - \Phi(s')\|$.
Taking the supremum over pairs $(s, s')$ yields \eqref{eq:bound}.
The decoded input gain of \eqref{eq:lin} is one such constant matrix at every lifted dimension and for every encoder, which gives the final claim.
\end{proof}

The bound depends on the variation of the input gain alone, and the deficit is not confined to the first step, since $\partial (C_y z_k)/\partial u_j = C_y A^{\,k-j-1} B$ depends on the elapsed steps $k-j$ and not on the state at which the input is applied.
A linear rollout therefore predicts the same response to a given input at every configuration a candidate visits.
The same deficit underlies the linear--bilinear criterion of~\cite{lee2026linearbilinear}.

\begin{algorithm}[t]
\caption{One control step of \bkmbd{}}
\label{alg:bkmbd}
\footnotesize
\begin{algorithmic}[1]
\REQUIRE state $s_t$, warm-started nominal $U$, schedule $\sigma_1 > \cdots > \sigma_S$, temperature $\alpha$, model $(A, B_0, \{B_i\}_{i=1}^{m}, \Psi_\theta, C)$
\STATE $z \leftarrow \Psi_\theta\big(\tau(b(s_t))\big)$ \hfill $\triangleright$ encoder once per control step
\FOR{$j = 1$ to $S$}
\FOR{$k = 1$ to $N$, batched}
\STATE $U_k \leftarrow \mathrm{proj}_{\mathcal{U}^{T}}\!\big(U + \sigma_j \varepsilon_k\big)$, \quad $\varepsilon_k \sim \mathcal{N}(0, I)$
\STATE $z^{k}_{0} \leftarrow z$
\FOR{$l = 1$ to $T$}
\STATE $z^{k}_{l} \leftarrow M\big(u^{k}_{l-1}\big) z^{k}_{l-1} + B_0 u^{k}_{l-1}$ \hfill $\triangleright$ $m+1$ mat--vec
\STATE $\hat{b}^{k}_{l} \leftarrow C z^{k}_{l}$
\ENDFOR
\STATE $J_k \leftarrow \sum_{l=1}^{T} c\big(\hat{b}^{k}_{l}, u^{k}_{l-1}\big) + \ell_T\big(\hat{b}^{k}_{T}\big)$
\ENDFOR
\STATE $w_k \leftarrow$ softmax weights \eqref{eq:weights} at temperature $\alpha$
\STATE $U \leftarrow \sum_{k=1}^{N} w_k U_k$ \hfill $\triangleright$ stage update \eqref{eq:update}
\ENDFOR
\STATE shift $U$ by one step to warm-start the next period
\RETURN $u_0$
\end{algorithmic}
\end{algorithm}

The bilinear class is not subject to this bound, since its decoded input gain is not a constant matrix. That gain
\begin{equation}
\frac{\partial (C_y z^{+})}{\partial u_i} = C_y b_{0,i} + C_y B_i z ,
\qquad i = 1, \dots, m,
\label{eq:bilsens}
\end{equation}
is affine in the lifted state, with $b_{0,i}$ the $i$th column of $B_0$.
The predicted response then matches \eqref{eq:resp} to first order at every configuration whenever
\begin{equation}
C_y b_{0,i} + C_y B_i\, \Psi_\theta\big(b(s)\big) = \Delta t\, \Phi_i(s),
\qquad s \in \mathcal{X},
\label{eq:match}
\end{equation}
where $\Phi_i$ is the $i$th column of $\Phi$.
Since $C_y = S_y C$ composes two selectors of full row rank, the bilinear class can represent \eqref{eq:match} when the entries of $\Phi$ lie in the span of $\{1, \Psi_1, \dots, \Psi_r\}$ over $\mathcal{X}$.
Other classes whose input gain may depend on the state can also satisfy \eqref{eq:match}.
What \eqref{eq:bil} adds is that the dependence enters in the affine form that \eqref{eq:resp} exhibits.

On a velocity-commanded platform the map from the command frame to the world frame turns as the platform reconfigures.
It ranges over its full extent across the postures a task visits, so $\nu_\Phi$ stays bounded away from zero and \eqref{eq:bound} bars the linear class.
Such a map is built from sines and cosines of the configuration angles and is multilinear in those terms.
Letting $\tau$ supply those trigonometric functions places the leading terms in the lift and leaves $\psi_\theta$ to carry the products among them, the form \eqref{eq:match} asks for.
On a plant whose input gain varies little the bound is small, and \cite{lee2026linearbilinear} turns the variation of $\Phi$ into a test on a given operating set.

The cost follows from the count of Sec.~\ref{subsec:mbd}.
One control step lifts the measured features once and propagates every candidate in the lifted space afterwards, so the encoder is evaluated once per control step rather than once per candidate, which places it outside both loops of Algorithm~\ref{alg:bkmbd}.
The $S$ stages run in series, so the control period is divided among $S$ batched rollouts of $N$ candidates over $T$ steps.

\subsection{Identification}\label{subsec:training}

The lifted features and the matrices of the lifted dynamics are fitted jointly to recorded trajectories.
Koopman models are ordinarily identified by a one-step fit~\cite{lusch2018deep}, which suits a model queried one step at a time.
A sampling planner instead propagates a candidate over the whole horizon and separates candidates by the cost that propagation accumulates.
A model that is accurate at one step and drifts over the horizon has its drift read as a property of the plant.
We therefore match the fitting length to the planning horizon and fit every model on length-$T$ snippets by
\begin{equation}
\mathcal{L} = \frac{1}{T} \sum_{k=1}^{T}
\Big[ \big\| C \hat{z}_k - b_k \big\|^{2}
+ \gamma \big\| \hat{z}_k - \Psi_\theta(b_k)^{\mathrm{sg}} \big\|^{2} \Big] ,
\label{eq:loss}
\end{equation}
starting the rollout at $\hat z_0 = \Psi_\theta(b_0)$ and proceeding by \eqref{eq:lin} or \eqref{eq:bil}.
The first term measures the prediction error of the decoded features, which is the quantity the cost reads.
The second term, weighted by $\gamma > 0$, holds the propagated latent state to the image of the encoder, which \eqref{eq:lin} and \eqref{eq:bil} leave it free to leave.
The stop-gradient on the latent target prevents the encoder from lowering the loss by pulling the target toward the prediction.

The bilinear input matrices are initialized at $B_i = 0$.
Training then begins exactly at the linear model \eqref{eq:lin}, and the coupling term departs from zero under the fitting objective alone.
The two classes share a starting point, an encoder, a dataset and an objective, and they differ in whether the coupling term is permitted to move.

\subsection{Annealing under a Learned Rollout}\label{subsec:anneal}

A learned rollout evaluates $\hat{J} = J + \Delta J$, where $\Delta J$ collects the effect of model error.
The weights \eqref{eq:weights} are invariant to the part of $\Delta J$ common to the population, and the remaining part reorders candidates.
A learned rollout therefore admits minima of its own, and under receding horizon the robot executes whatever the sampler settles on.
The schedule cannot remove this error from the rollout.
Its role is instead to reduce how strongly one control step depends on a single local view of the surrogate cost.

Two properties of the update \eqref{eq:update} bear on this, and both belong to the sampling rather than to the model.
The first is how widely one stage samples.
The proposals $U_k = U + \sigma \varepsilon_k$ are spread about the current plan at a scale set by $\sigma$, so $\sigma$ fixes how far from that plan the cost is evaluated at all.
The same scale bounds the update, which is a convex combination of those proposals, so the width a stage explores is set primarily by the noise level.
That scale is an upper limit rather than a description, and a stage whose population holds nothing better than the current plan returns a weighted mean close to that plan at any $\sigma$.

The second is what one stage does with that width.
The update moves the nominal by $\sigma \sum_k w_k \varepsilon_k$, which carries $\sigma$ as a factor, so the noise level sets the size of the step.
It also sets where the step points, since by \eqref{eq:tweedie} the stage ascends $\log p_{\sigma}$ and $p_\sigma$ is the target smoothed at that level.
A stage therefore descends the cost smoothed at its own noise level.

One noise level fixes both quantities, so a fixed-noise sampler must trade exploration against refinement.
A small $\sigma$ gives local proposals, a less-smoothed cost and a small final displacement, but it can remain tied to a surrogate-induced local minimum.
A large $\sigma$ evaluates a broader neighborhood, but it reads a more heavily smoothed cost and produces a coarser update.
This does not establish that a larger noise level is better; it only characterizes the scale at which the stage reads and moves.
The empirical question is whether combining scales is preferable to assigning the same sample budget to one scale.

The annealed schedule separates the two roles across successive stages.
It opens at a wide level, where the population is spread furthest from the current plan, and closes at a narrow level, where the smoothing and the final displacement are smallest.
Under receding horizon the executed command comes from the last stage.
The early stages therefore expose the optimizer to a broader neighborhood of the current plan, while the last stage refines the executed command under a less-smoothed surrogate cost.
In the single-basin reaching task below, this suggests that the schedule's advantage should be tied to learned-rollout error, a point tested by repeating the schedules with an exact rollout.

\section{Experiments}\label{sec:exp}

\begin{table}[t]
\centering
\begin{threeparttable}
\caption{Implementation settings.}
\label{tab:hyper}
\footnotesize
\begin{tabularx}{\linewidth}{l Y Y}
\toprule
Setting & Manipulator & Drone \\
\midrule
Control period $\Delta t$ & $50$\,ms & $50$\,ms \\
Budget per trial & $120$ steps & $120$ steps \\
Candidates $N$, stages $S$ & $800$, $5$ & $800$, $5$ \\
Horizon $T$ & $15$ & $25$ \\
Noise $\sigma$ & $1.2 \to 0.3$ & $1.6 \to 0.3$ \\
Temperature $\alpha$ & $0.4$ & $0.5$ \\
Lift dimension $r$, latent weight $\gamma$ & $20$, $0.1$ & $13$, $0.1$ \\
Fixed feature map $\tau$ & $[\sin q, \cos q]$ & identity \\
Encoder $\psi_\theta$, layers $\times$ width & $2 \times 96$, tanh & $2 \times 64$, tanh \\
Training snippets, $T$ steps each & $6000$ & $4000$ \\
\bottomrule
\end{tabularx}
\end{threeparttable}
\vspace{-0.3cm}
\end{table}

We evaluate \bkmbd{} on two velocity-commanded platforms in simulation and on a physical manipulator.

The experiments that follow ask four questions.
What must the rollout model represent for the planner to reach the goal, and does it meet the control period?
Where does the advantage of the annealed schedule come from?
Does the structure of the free space separate the two optimizers on a passage no single convex region covers?
And do both requirements bind on a physical robot?

\subsection{Setup}\label{subsec:setup}

\textbf{Platforms.}
Sec.~\ref{subsec:bilinear} leaves the input gain to be instantiated on each platform, and Table~\ref{tab:hyper} gives the fixed feature map $\tau$ of \eqref{eq:lift} that follows from it.
The Franka Research 3 (FR3) accepts joint velocities through a gravity-compensated servo, tracks the tool center point (TCP) between its fingertips, and is observed as $b = [q, p_{\mathrm{tcp}}]$.
Its input gain is the manipulator Jacobian.\footnote{The FR3 is used deliberately because its analytic input gain provides a reference for testing whether the learned rollout recovers the same configuration dependence from data, while also enabling the \dksplit{} ablation. The goal is not to replace the known Jacobian, but to validate a rollout structure applicable when the corresponding input map is unknown or not readily available.}
The drone accepts body-frame linear velocities and a yaw rate, tracks its world-frame position, and is observed as $b = [p, \sin\psi, \cos\psi]$.
Its input gain is the yaw rotation.
The manipulator therefore carries the experiments of Secs.~\ref{subsec:exp_class} and~\ref{subsec:exp_anneal} and the hardware experiment of Sec.~\ref{subsec:exp_hw}, and the drone carries the non-convex passage of Sec.~\ref{subsec:exp_window}.

\textbf{Task and protocol.}
Every comparison below holds the optimizer or the rollout model fixed and exchanges the other.
In the reaching experiments each trial starts from rest and drives the tracked output toward a goal over the fixed budget of Table~\ref{tab:hyper}, without early termination.
A trial reaches a tolerance at the first control step whose tracking error falls below it, and we report the $5$\,cm and $1$\,cm tolerances.
Where a condition fails to reach, we report the tracking error at the end of the budget.
Candidates are scored by the squared tracking error with a control penalty and a terminal penalty.
Each dataset is collected from randomized initial conditions over the operating set under randomized velocity commands that keep a coherent direction across a snippet, so the configuration travels far enough for the state dependence of the input gain to be visited.
The reaching experiments cross $5$ training seeds, each with its own dataset, with $10$ goals for $50$ trials per condition, and the trials are paired across conditions.
A condition that carries no learned model is run over the $10$ goals alone.
All timings are measured on an Intel Core Ultra 5 225F with ten cores and no GPU. Simulation uses ten PyTorch CPU threads, whereas physical-robot trials limit PyTorch inference to four threads.

\subsection{Rollout Class and Planning Latency}\label{subsec:exp_class}

\begin{table}[t]
\centering
\begin{threeparttable}
\caption{Reaching accuracy under an identical planner.}
\label{tab:class}
\footnotesize
\begin{tabularx}{\linewidth}{l Y Y}
\toprule
Rollout model & Reach $5$\,cm & Reach $1$\,cm \\
\midrule
\mbdtrue{} & $10/10$ & $10/10$ \\
\dkmbd{} & $10/50$ & $1/50$ \\
\dklarge{} & $11/50$ & $1/50$ \\
\mlpmbd{} & $41/50$ & $0/50$ \\
\dksplit{} & $50/50$ & $50/50$ \\
\bkmbd{} (ours) & $\mathbf{50/50}$ & $\mathbf{50/50}$ \\
\bottomrule
\end{tabularx}
\end{threeparttable}
\end{table}

We settle whether the input gain is what decides reaching by exchanging the rollout model alone.
Table~\ref{tab:class} lists them by how the input gain enters.
\dkmbd{} leaves it a constant matrix.
\dklarge{} keeps that matrix but raises the lifted dimension, and \mlpmbd{} lets it depend on the state without structure, so the two separate capacity and expressiveness from the input gain.
\bkmbd{} represents the state-dependent gain inside the model, and \dksplit{} is the ablation that supplies it from the analytic forward kinematics instead.
\mbdtrue{} rolls out the true dynamics as the oracle reference. 

On the manipulator, \bkmbd{} reached the $1$\,cm tolerance on $50/50$ trials, matching the oracle, while \dkmbd{} reached it on $1/50$ and came to rest a median $0.248$\,m from the goal.
\dksplit{} also reached $50/50$, and it differs from \dkmbd{} in the input gain alone, so that is what separated the two, as Lemma~\ref{lem:linear} predicts.

\begin{figure}[t]
\centering
\captionsetup{skip=2pt}
\includegraphics[width=0.8\linewidth]{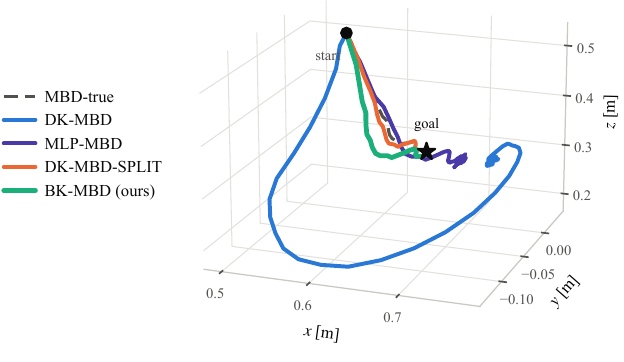}
\caption{Executed tool paths on one goal, one of the trials Table~\ref{tab:class} counts. The five runs share the planner, the training seed and the random stream.}
\label{fig:paths}
\end{figure}

\dklarge{} and \mlpmbd{} stopped a median $0.239$\,m and $0.054$\,m away.
The bound \eqref{eq:bound} carries no lifted dimension, so a larger dictionary still leaves a single matrix, and \mlpmbd{} can represent a state-dependent gain but did not identify it from the same data under the same objective.
Neither capacity nor expressiveness therefore removes the failure, which leaves the input gain as the cause. Fig.~\ref{fig:paths} draws one goal of the comparison.

\begin{table}[t]
\centering
\begin{threeparttable}
\caption{End-to-end planning latency per receding-horizon control step over all trials.}
\label{tab:latency}
\footnotesize
\begin{tabularx}{\linewidth}{l Y Y Y}
\toprule
Rollout model & Median [ms] & Worst [ms] & Misses \\
\midrule
\mbdtrue{} & $2598$ & $2745$ & $1200$ \\
\dkmbd{} & $7.6$ & $12.5$ & $0$ \\
\bkmbd{} (ours) & $11.0$ & $14.7$ & $\mathbf{0}$ \\
\bottomrule
\end{tabularx}
\end{threeparttable}
\end{table}

\begin{table}[t]
\centering
\begin{threeparttable}
\caption{Noise schedules at an equal total sample budget. The rows marked (oracle) use the true-dynamics rollout, which carries no model error.}
\label{tab:anneal}
\footnotesize
\begin{tabularx}{\linewidth}{l Y Y}
\toprule
Schedule & Reach $1$\,cm & Median final error [m] \\
\midrule
$1 \times NS$ wide (MPPI) & $33/50$ & $0.010$ \\
$1 \times NS$ narrow (MPPI) & $15/50$ & $0.041$ \\
$S \times N$ wide & $46/50$ & $0.012$ \\
$S \times N$ narrow & $26/50$ & $0.045$ \\
$S \times N$ anneal (ours) & $\mathbf{50/50}$ & $\mathbf{0.0097}$ \\
\midrule
$S \times N$ narrow (oracle) & $10/10$ & $0.009$ \\
$S \times N$ anneal (oracle) & $10/10$ & $0.007$ \\
\bottomrule
\end{tabularx}
\end{threeparttable}
\end{table}

We then measure whether the rollout model meets the control period.
Table~\ref{tab:latency} reports the end-to-end planning time per control step, which covers every line of Algorithm~\ref{alg:bkmbd} over all five annealing stages.
The \bkmbd{} planner returned at a median of $11.0$\,ms with a worst case of $14.7$\,ms and missed no deadline over all $50$ trials, while the oracle took a median of $2598$\,ms and overran the $50$\,ms period at every one of its $1200$ control steps.
The gap follows from what one control step asks of the rollout model.
The $2$\,ms integrator turns $NST = 60{,}000$ rollout steps into $1.5$ million physics substeps, and the $ST = 75$ batched steps among them lie on a serial chain.
On the manipulator, a bilinear step issues $m + 1 = 8$ matrix--vector products instead of one, yet shared lifted-state batching raises the planner time only from $7.6$\,ms to $11.0$\,ms, less than a factor of $1.5$.

\subsection{The Contribution of the Annealed Schedule}\label{subsec:exp_anneal}

We settle whether the advantage of the annealed schedule comes from learned-rollout error by varying the noise schedule alone over the bilinear model.
The five schedules of Table~\ref{tab:anneal} share one total budget of $NS$ candidates per control step, written as stages $\times$ candidates per stage.
$1 \times NS$ draws the whole budget at once and $S \times N$ spreads it over $S$ stages, at the wide level $\sigma = 1.2$, the narrow level $\sigma = 0.3$, or annealed from wide to narrow.
By the third property of Sec.~\ref{subsec:mbd}, the single-stage conditions are the MPPI update at the same budget.
The obstacle-free task has no true-cost multimodality, so annealing gains cannot arise from it.

\begin{table}[t]
\centering
\begin{threeparttable}
\caption{Step size and step quality at a fixed noise level. Efficiency is the error removed per unit displacement.}
\label{tab:step}
\footnotesize
\begin{tabularx}{\linewidth}{l Y Y Y}
\toprule
$\sigma$ & $|\Delta U|$ & Error removed & Efficiency($\times 10^{3}$) \\
\midrule
$0.3$ & $0.109$ & $0.00665$ & $61.7$ \\
$0.8$ & $0.292$ & $0.00571$ & $19.2$ \\
$1.2$ & $0.418$ & $0.00409$ & $9.9$ \\
anneal & $0.109$ & $0.00525$ & $48.4$ \\
\bottomrule
\end{tabularx}
\end{threeparttable}
\end{table}

\begin{figure}[!tbp] 
\centering
\captionsetup{skip=2pt}
\begin{subfigure}{0.42\linewidth}
\includegraphics[width=\linewidth]{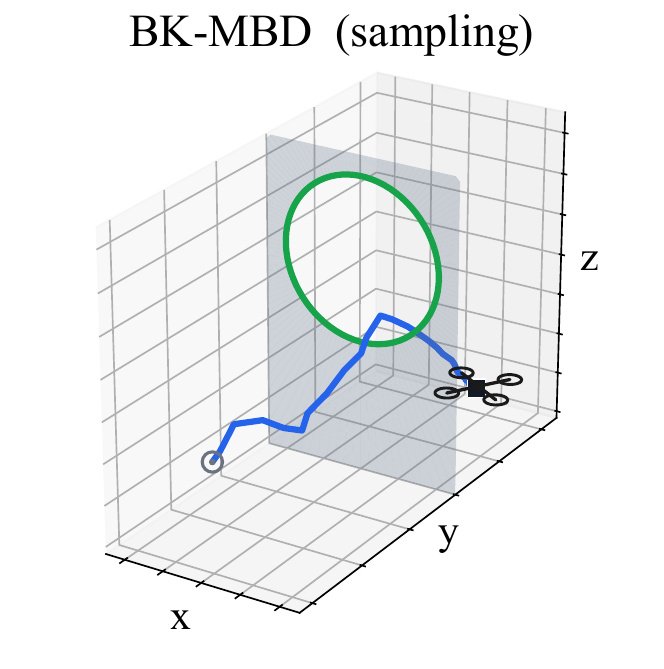}
\caption{}
\label{fig:window_bk}
\end{subfigure}
\hfill
\begin{subfigure}{0.42\linewidth}
\includegraphics[width=\linewidth]{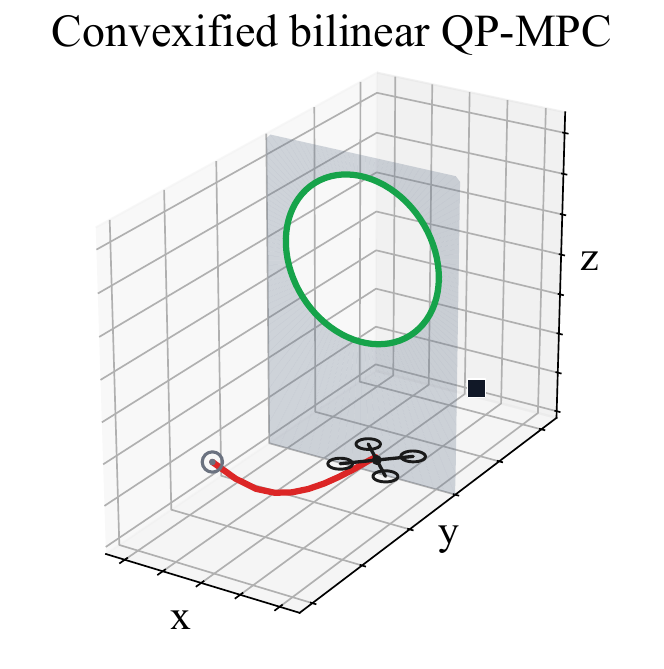}
\caption{}
\label{fig:window_qp}
\end{subfigure}
\caption{Non-convex passage on the drone. The wall admits a single circular aperture. (a) \bkmbd{} threads the aperture and reaches the far-side goal. (b) the convexified controller holds the near side.}
\label{fig:window}
\end{figure}

Table~\ref{tab:anneal} reports the outcome at the $1$\,cm tolerance, where the schedules separate.
The annealed schedule reached on all $50$ trials and left the lowest tracking error at the end of the budget.
It gains $17$ and $35$ goals over the single-stage wide and narrow MPPI conditions, and $4$ and $24$ over the conditions that repeat the same level $S$ times.
Neither the sample budget nor the application of several serial stages therefore accounts for the result, and the changing noise scale is the separating factor.

Table~\ref{tab:step} reports the displacement one control step produces and the tracking error it removes, averaged over control steps and trials, and their ratio as an efficiency.
From $\sigma = 0.3$ to $\sigma = 1.2$ the displacement grows by a factor of $3.8$ while the efficiency drops by a factor of six.
The annealed schedule instead executes at the displacement of the narrow level, $0.109$, and retains an efficiency of $48.4$ against the $9.9$ of the wide level.
The final stage produces the executed step, while the preceding wider stages have already exposed the iterate to a broader neighborhood of the current plan.

\begin{figure*}[t]
\centering
\captionsetup{skip=2pt}
\begin{subfigure}{0.49\linewidth}
\includegraphics[width=\linewidth]{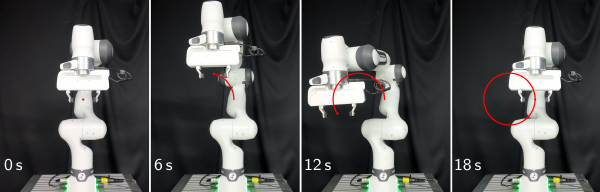}
\caption{}
\label{fig:hw_strip_bk}
\end{subfigure}
\hfill
\begin{subfigure}{0.49\linewidth}
\includegraphics[width=\linewidth]{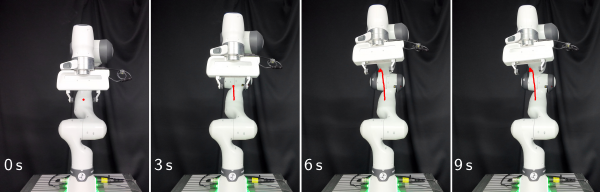}
\caption{}
\label{fig:hw_strip_dk}
\end{subfigure}

\vspace{0.2em}

\begin{subfigure}{0.49\linewidth}
\includegraphics[width=\linewidth]{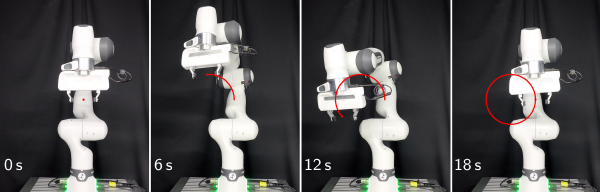}
\caption{}
\label{fig:hw_strip_split}
\end{subfigure}
\hfill
\begin{subfigure}{0.49\linewidth}
\includegraphics[width=\linewidth]{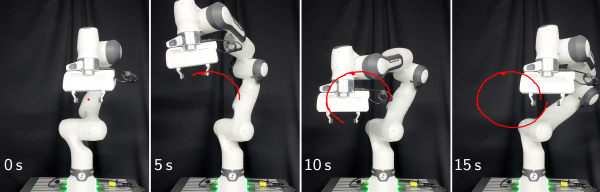}
\caption{}
\label{fig:hw_strip_mppi}
\end{subfigure}
\caption{Circular tracking on the physical FR3, with postures sampled at equal intervals over one revolution and the executed tool path traced in red. (a) \bkmbd{}, which completes the revolution. (b) \dkmbd{}, from the start of the command to the guard stop. (c) \dksplit{}. (d) \textsc{BK-MPPI}.}
\label{fig:hw_strip}
\end{figure*}

The model and the true dynamics disagree on the settled plan of the narrow schedule, $0.024$\,m against $0.053$\,m, and agree more closely on that of the annealed schedule, $0.020$\,m against $0.015$\,m.
Repeating the two schedules on the true-dynamics rollout, over the same goals and the same planner stream, removes the separation.
The narrow schedule reaches $10/10$ as the annealed schedule does, against $26/50$ under the learned rollout, and its settled error moves from $0.045$ to $0.009$\,m.
The narrow level settles quickly on either rollout, so what it loses under the learned one is consistent with sensitivity to the surrogate landscape rather than with an inability of the sampler to refine a plan.

\subsection{Non-Convex Passage}\label{subsec:exp_window}

We settle whether the structure of the free space is what separates the two optimizers by exchanging the optimizer alone.
A single circular aperture in a wall is the only route to the goals on the far side, and the straight chord to any goal crosses the wall below the opening.
The drone flies $50$ trials, and both planners receive the same trained bilinear model.
\bkmbd{} takes the wall as a cost penalty, while the convexified controller relinearizes the keep-out set into one convex region per step and solves the resulting QP over four iterations along the nominal rollout~\cite{bruder2021bilinear}.

\bkmbd{} reached the $1$\,cm tolerance on $50/50$ goals and the convexified controller on $10/50$, and \bkmbd{} executed no constraint violation although it treats the wall as a penalty alone.
Fig.~\ref{fig:window} draws the paths of the two planners.

The free space is the two sides of the wall joined through the aperture.
Any convex region that excludes the wall lies entirely on the near side, so the far-side goal is infeasible at every step and the solve converges to the point of that region closest to the goal.
The convexified controller settled a median of $0.030$\,m from the wall, the margin of its own keep-out set, and the ten goals it reached were those closest to the aperture axis, at most $0.148$\,m against a median of $0.291$\,m. A convexification of the dynamics and one of the free space act on it at once, but frozen dynamics would instead fail by tracking a poor nominal, so both point to the free space.

\begin{table}[t]
\centering
\begin{threeparttable}
\caption{Circular tracking on the physical FR3 over ten planner seeds.}
\label{tab:hw}
\footnotesize
\begin{tabularx}{\linewidth}{l Y Y Y}
\toprule
Method & Full run & RMSE [mm] & Median / worst [ms] \\
\midrule
\bkmbd{} (ours) & $10/10$ & $7.09 \pm 0.12$ & $22.9$ / $44.7$ \\
\dkmbd{} & $0/10$ & $39.75 \pm 0.67$ & $10.9$ / $23.4$ \\
\dksplit{} & $10/10$ & $4.97 \pm 0.09$ & $28.6$ / $83.0$ \\
\textsc{BK-MPPI} & $8/10$ & $15.76 \pm 5.23$ & $17.0$ / $53.1$ \\
\mbdtrue{} & $0/10$ & $41.22 \pm 0.07$ & $1617$ / $1762$ \\
\bottomrule
\end{tabularx}
\end{threeparttable}
\end{table}

\subsection{Validation on the Physical Robot}\label{subsec:exp_hw}

Five methods listed in Table~\ref{tab:hw}, track a target moving along a circle on the physical FR3 under the same CPU budget.
The target traverses a circle of radius $0.10$\,m and period $16$\,s in the $yz$ plane, and the trajectory is not given in advance, so the controller sees only the target position at each control step.
Fig.~\ref{fig:hw_strip} shows the postures over one revolution.
Each trial runs for $35$\,s, a $3$\,s lead-in followed by two revolutions.
A guard commands zero velocity and ends the trial when the Cartesian tracking error exceeds $80$\,mm for $0.25$\,s.
Each rollout class is run ten times, over planner seeds $0$ to $9$.
The models are identified from data recorded on this robot under the protocol of Sec.~\ref{subsec:setup}. \textsc{BK-MPPI} uses the single-stage narrow condition, which shows a distinct learned-rollout effect in Table~\ref{tab:anneal}.

Only \bkmbd{} and \dksplit{} completed all ten trials.
\dkmbd{} triggered a guard stop in every trial, while \textsc{BK-MPPI} completed eight and exceeded the guard in the remaining two.
\mbdtrue{} issued no nonzero command because its planning latency exceeded the control period.
\bkmbd{} returned at a median of $22.9$\,ms with a worst case of $44.7$\,ms and stayed within the $50$\,ms period over all ten trials, whereas \dksplit{} overran it at $83.0$\,ms, so \bkmbd{} is the only method here that met both the tracking task and the deadline.

The failure of \dkmbd{} lies in the representation and that of \mbdtrue{} in the computation.
The state-independent input derivative $C B_0$ of \dkmbd{} represents only the mean input effect over the training distribution, as reflected by the axis errors of Fig.~\ref{fig:hw_error} and its directional cosine of $0.318$, against $0.914$ for \bkmbd{}.
The successful tracking of \dksplit{} corroborates on hardware the representation limit identified in the preceding simulation experiment.
Likewise, the contrasting closed-loop outcomes of \bkmbd{} and \textsc{BK-MPPI} under the same BK rollout show that the benefit of the annealed schedule persists on the physical robot.
Rolling the settled plans of the completed \textsc{BK-MPPI} trials through the model and the robot gave terminal errors of $8.90 \pm 2.08$\,mm and $19.25 \pm 8.21$\,mm, respectively, with the robot error larger in all but one trial.
Because this discrepancy belongs to the learned BK rollout shared by both planners rather than to MPPI itself, the result supports the earlier conclusion that the annealed schedule is less sensitive to surrogate error than a single-stage narrow update.

\begin{figure}[t]
\centering
\includegraphics[width=\linewidth]{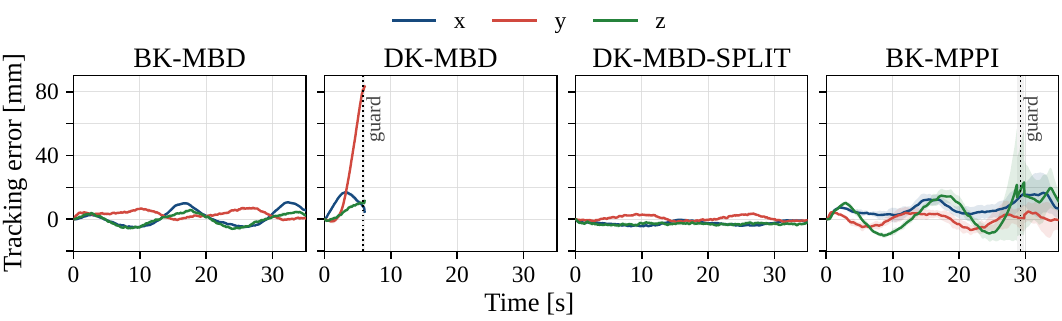}
\caption{Signed TCP tracking error per axis on the physical FR3, mean over ten planner seeds with one standard deviation. The dotted line marks the guard stop.}
\label{fig:hw_error}
\end{figure}

\section{Conclusion}\label{sec:conclusion}

We introduced \bkmbd{}, which runs the annealed sampling of MBD at the control rate by performing every rollout with a bilinear Koopman model identified from interaction data.
In simulation, exchanging one component of the controller at a time separated three effects: the input gain of the rollout, the noise schedule under a learned rollout, and the structure of the free space.
On a physical manipulator, \bkmbd{} tracked a moving target and met the control deadline at every step; no other method did both.
The input gain and the noise schedule kept the roles that simulation assigned them.
MBD owes its behavior in contact-rich settings to a rollout that contains the contact, and a lifted model would instead have to represent it.
Every task here is contact-free, so whether the substitution preserves that behavior is left for future work.

\bibliographystyle{IEEEtran}
\bibliography{IEEEabrv,references}

\end{document}